\documentclass[times,review,10pt]{elsarticle}

\usepackage{amssymb}
\usepackage{amsmath}
\usepackage{booktabs}   
\usepackage{colortbl} 
\usepackage{xcolor}    
\usepackage{multirow}
\usepackage{amsmath, amssymb, amsfonts}
\usepackage{bm}
\usepackage{algorithm}
\usepackage{algorithmic}

\usepackage{booktabs}
\usepackage{graphicx}
\usepackage{multirow}
\usepackage{booktabs}
\usepackage{bbding}
\usepackage{amsmath}
\usepackage{amssymb}
\usepackage{amsthm}
\usepackage{bm}
\usepackage{hyperref}

\newcommand{\cca}[1]{\cellcolor{blue!#1!white}}
\newcommand{\ccl}[1]{\cellcolor{orange!#1!white}}

\newtheorem{theorem}{Theorem}
\newtheorem{lemma}{Lemma}

\newtheorem{assumption}{Assumption}

\usepackage{hyperref}
\hypersetup{
    colorlinks=true,
    linkcolor=ccr,
    anchorcolor=ccr,
    citecolor=ccr}

\journal{Knowledge-Based Systems}

\definecolor{highlightblue}{RGB}{235, 245, 255}
\definecolor{lightgrey}{RGB}{240, 240, 240}
\definecolor{posgreen}{RGB}{0, 150, 80}
\definecolor{negred}{RGB}{200, 50, 50}

\begin{document}

\begin{frontmatter}

\title{Divide-and-Conquer Inference for Large-Scale Visual Recognition with Multimodal Large Language Models}

\author[aff1]{Zhipeng Ye} 
\ead{zhipengye@nustti.edu.cn}
\author[aff1]{Jiaqi Huang}
\ead{2207880112@nustti.edu.cn}
\author[aff1]{Feng Jiang\corref{cor1}}
\ead{jf@nustti.edu.cn}
\author[aff2]{Qiufeng Wang}
\ead{qiufeng.wang@xjtlu.edu.cn}
\author[aff2]{Yikang Duan}
\ead{Yikang.Duan23@student.xjtlu.edu.cn}
\author[aff1]{Dawei Wang}
\ead{22017019@nustti.edu.cn}
\author[aff4]{Xihang Zhou}
\ead{xihang.zhou@mail.utoronto.ca}
\author[aff3]{Qian Qiao}
\ead{joeqian@aliyun.com}

\affiliation[aff1]{organization={Taizhou Institute of Science and Technology, Nanjing University of Science and Technology},
            city={Taizhou},
            postcode={225300}, 
            state={Jiangsu},
            country={China}}
\affiliation[aff2]{organization={Department of Intelligence Science, Xi’an Jiaotong-Liverpool University},
            city={Suzhou},
            postcode={215123}, 
            state={Jiangsu},
            country={China}}
\affiliation[aff3]{organization={School of Computer Science and Technology, Soochow University},
            city={Suzhou},
            postcode={215123}, 
            state={Jiangsu},
            country={China}}
\affiliation[aff4]{organization={Department of Statistical Sciences, University of Toronto},
            city={Toronto},
            postcode={M5S 1A1}, 
            state={Ontario},
            country={Canada}}

\cortext[cor1]{Corresponding author}

\begin{abstract}
Multimodal Large Language Models (MLLMs) have demonstrated strong capabilities across a wide range of vision-language tasks. However, when applied to large-scale image classification, their performance degrades significantly as the label space expands—a phenomenon we define as Performance Collapse in Long Sequence Recognition. Through an information-theoretic analysis, we reveal that this collapse stems from a fundamental conflict between the escalating information entropy and the prominent attention dilution and decay within attention mechanisms, which impairs the model's ability to maintain a sufficient signal-to-noise ratio when processing extremely long prompts. To mitigate this, we propose Divide-and-Conquer Inference (DCI), a novel test-time scaling strategy for visual recognition with MLLMs. DCI recursively decomposes complex global classification tasks into multiple simpler, localized sub-problems and employs a dynamic pruning mechanism to compress the search space. This method effectively improves the local signal-to-noise ratio and model accuracy by mitigating the inherent weight dilution issues in long-sequence inference. Moreover, while traditional self-attention incurs a prohibitive quadratic computational complexity, DCI achieves more favorable scaling behavior and substantially accelerates inference in large-scale classification scenarios. Extensive experiments on benchmarks such as ImageNet-1K and ImageNet-21K demonstrate that DCI consistently improves classification accuracy. This enables lightweight open-source models to rival or even surpass frontier closed-source giants without any additional training or fine-tuning. As a model-agnostic, plug-and-play paradigm, DCI offers an efficient approach for scaling the inferential precision of MLLMs in large-scale scenarios. To facilitate reproducibility and further research, the source code is publicly available at: \url{https://github.com/FourierAI/DCI}.
\end{abstract}

\begin{highlights}
\item We identify performance degradation in long-sequence recognition.
\item A Divide-and-Conquer Inference framework is proposed for MLLMs.
\item DCI improves classification accuracy without additional training.
\item DCI reduces inference overhead in large-scale recognition tasks.
\item Extensive experiments validate scalability across diverse MLLMs.
\end{highlights}

\begin{keyword}

Test-Time Scaling\sep Image Classification\sep Multimodal Large Language Models\sep Divide-and-Conquer

\end{keyword}

\end{frontmatter}



\section{Introduction}

In recent years, Multimodal Large Language Models (MLLMs)\cite{mllm_survey} have emerged as a cornerstone of Artificial Intelligence research. Governed by the Scaling Law\cite{scaling_law}, models such as GPT-4V\cite{gpt4}, Qwen3-VL\cite{qwen3}, and DeepSeek-VL\cite{deepseek_vl} have achieved state-of-the-art performance by scaling up datasets and parameter counts. Beyond synergistic multimodal understanding, these models exhibit profound emergent abilities, including complex reasoning, in-context learning, and self-reflection. Consequently, MLLMs are increasingly viewed as a viable path toward Artificial General Intelligence (AGI)\cite{agi}, sparking a surge of interest in adapting them for traditional Computer Vision (CV) tasks.

Prior research indicates that distilling open-world knowledge from MLLMs into vision models can substantially boost performance. For instance, LLM2CLIP~\cite{llm2clip} replaces the CLIP\cite{clip} text encoder with a pre-trained Large language Model (LLM) and employs lightweight fine-tuning to achieve state-of-the-art (SOTA) results. LaCLIP~\cite{laclip} leverages in-context learning to rewrite image captions into semantically diverse versions, enhancing robustness through ``text-side random augmentation.'' Similarly, IDEA~\cite{idea} utilizes MLLMs to generate fine-grained descriptions, using an Adapter to heighten feature perception in few-shot scenarios. Moreover, LLaVA-OneVision~\cite{llava} unifies diverse vision tasks into a Visual Question Answering (VQA) framework by aligning a frozen SigLIP encoder with the Qwen2\cite{qwen2} language model, significantly improving model versatility.

Despite their success, directly applying MLLMs to large-scale image classification remains challenging. While a prevalent view\cite{bad} attributes this to a lack of domain-specific training data, we argue that the primary bottleneck arises from the growing size of the label space itself. To verify this, we conduct extensive benchmarks on ImageNet-1K using mainstream MLLMs (e.g., LlaMA\cite{llama}, Gemma\cite{gemma}, and Qwen\cite{qwen3}). By sampling label subsets of sizes $\mathcal{S} \in \{10, 20, 100, 200, \allowbreak 500, 1000\}$, we observe a consistent trend (see Fig.~\ref{fig:intro}): \textbf{classification accuracy degrades precipitously as the label set size increases}, regardless of model architecture or scale. We refer to this phenomenon as Performance Collapse in Long Sequence Recognition (PC-LSR). Notably, LlaMA 3.2-11B-Vision maintains 88.60\% accuracy on 20 classes but suffers a catastrophic performance collapse as the set grows to 200, eventually dropping to a mere 0.53\% accuracy on the full 1000-class set.

\begin{figure}[!h]
    \centering
    \includegraphics[width=0.9\linewidth]{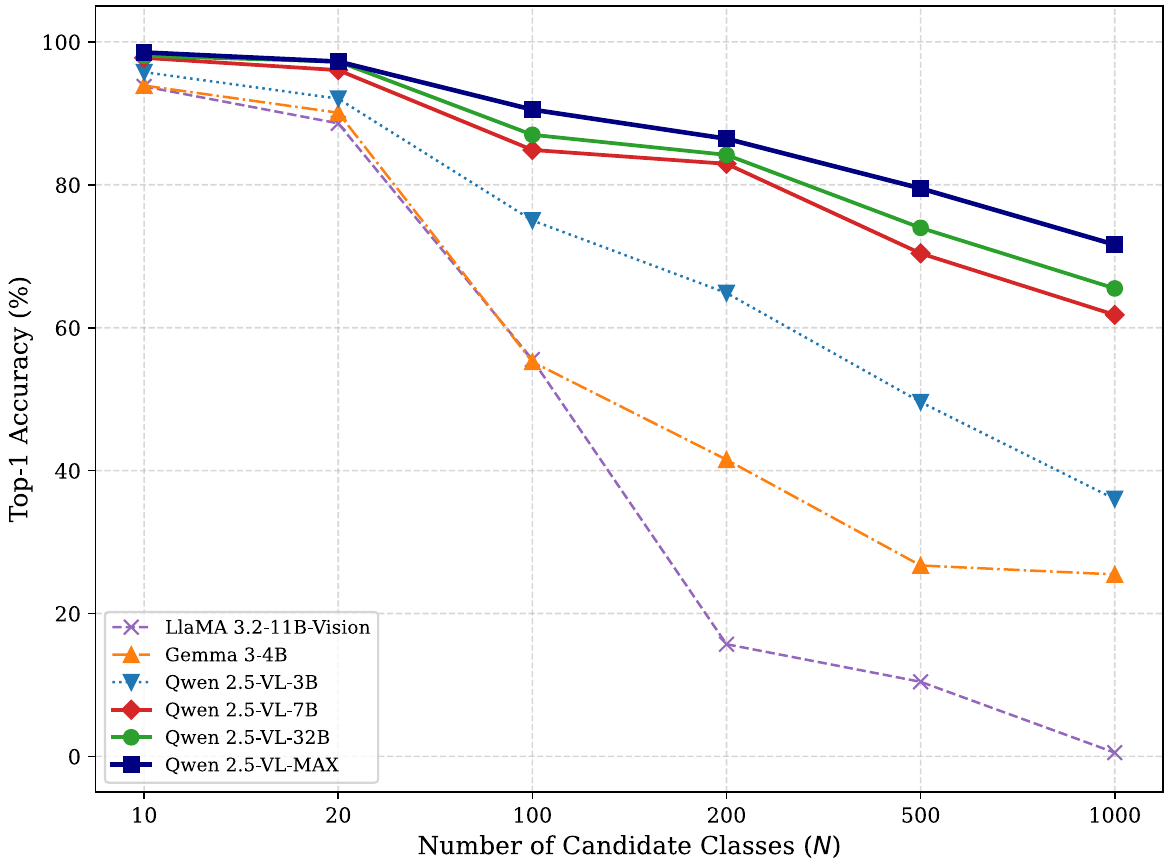}
    \caption{LLM performance on ImageNet-1K across varying numbers of candidate classes. The x-axis represents the specific number of candidate classes presented to the model in a closed-set classification task, sampled at intervals of 10, 20, 100, 200, 500, and 1,000. The y-axis shows the resulting score. Each line represents the performance trajectory of a specific MLLM as the candidate space expands.}
    \label{fig:intro}
\end{figure}

In this paper, we provide an information-theoretic explanation for this phenomenon: \textbf{(1) Uncertainty Scaling}, where increasing label set size raises information entropy and thus decision complexity; and \textbf{(2) Attention Dilution}, where long candidate lists reduce the Signal-to-Noise Ratio (SNR) of attention, impairing the model’s ability to identify the correct label amid excessive textual noise. To mitigate these issues, we propose \textbf{Divide-and-Conquer Inference (DCI)}, a simple yet potent strategy tailored for image classification. Inspired by the divide-and-conquer paradigm, DCI partitions a massive category list into multiple manageable subsets for local inference. By recursively aggregating results, DCI transforms a complex global search into a series of simplified local decisions. This ``coarse-to-fine'' refinement acts as a dynamic attention mask, effectively reducing the influence of irrelevant categories and preserving the SNR. We evaluate DCI across multiple benchmarks (ImageNet-21K\cite{imagenet}, ImageNet-1K\cite{imagenet}, CIFAR-100\cite{cifar100}, CUB-200\cite{cub200}, Food-101\cite{food101}) using various MLLMs. Experimental results show that DCI consistently boosts performance; notably, an open-source Qwen3-VL-8B\cite{qwen3} equipped with DCI can achieve competitive performance compared to trillion-parameter scale models, such as GPT-4\cite{gpt4} and Qwen3-VL-PLUS\cite{qwen3}. Moreover, in large-scale image classification scenarios, while standard flat inference suffers from a prohibitive $\mathcal{O}(N^2)$ computational complexity due to global context bottlenecks, DCI substantially reduces the computational overhead, delivering significantly faster inference speeds.

Our contributions are summarized as follows:
\begin{itemize}
    \item \textbf{Systematic Analysis}: To better understand this behavior, we conduct a systematic empirical study of the performance collapse of MLLMs under large-scale label sets and provide a information-theoretic analysis of its underlying causes.
    \item \textbf{Inference Strategy}: We introduce Divide-and-Conquer Inference (DCI), a novel paradigm that recursively decomposes complex classification tasks into parallel, simple sub-tasks, effectively addressing noise interference in long-context attention.
    \item \textbf{Empirical Validation}: Extensive experiments show that DCI is a training-free, model-agnostic, and plug-and-play framework. In large-scale image classification scenarios, DCI simultaneously delivers higher recognition accuracy and reduced computational latency, allowing lightweight open-source models to rival frontier closed-source giants.
\end{itemize}

\section{Related Work}
\subsection{Test-Time Scaling in Large Language Models}

The Scaling Law\cite{scaling_law} describes the empirical relationship between model performance and scale, including model parameters, data volume, and computational resources. Recent advances in Large Language Models (LLMs), such as GPT-4\cite{gpt4}, DeepSeek-R1\cite{r1}, and Qwen-3\cite{qwen3}, have largely benefited from increasing model size. However, training-time scaling faces growing challenges due to rising computational costs and the limited availability of high-quality data.

To address these limitations, recent studies have shifted toward Test-Time Scaling (TTS)\cite{tts}, which improves downstream performance by allocating additional computation during inference. Existing TTS approaches can be broadly categorized into four types:
(1) \textbf{Parallel scaling} aggregates multiple candidate reasoning paths to improve output reliability, represented by Self-Consistency (SC)\cite{sc}. 
(2) \textbf{Sequential scaling} increases reasoning depth through intermediate steps, such as Chain-of-Thought (CoT)\cite{cot}. 
(3) \textbf{Hybrid scaling} jointly explores reasoning breadth and depth, including Tree of Thoughts (ToT)\cite{tot} and Graph of Thoughts (GoT)\cite{got}. 
(4) \textbf{Internal scaling} enhances autonomous reasoning through reinforcement learning or post-training optimization, as demonstrated by OpenAI-o1\cite{o1} and DeepSeek-R1~\cite{r1}.

In this paper, we propose Divide-and-Conquer Inference (DCI), a novel hybrid scaling strategy for image classification that synergistically combines the strengths of parallel and sequential scaling. On one hand, the parallel component facilitates computational acceleration by decomposing complex tasks into independent sub-tasks for concurrent execution. On the other hand, the sequential mechanism ensures progressive task simplification, where the outputs of each iteration serve as refined inputs for the next, systematically reducing the problem's complexity until the final result is obtained.

\subsection{Divide-and-Conquer Paradigm}

Divide-and-Conquer\cite{dac} is a paradigm for solving complex problems by recursively breaking an expansive problem into several identical sub-problems of smaller scale, until they become simple enough to be solved directly. The solutions to these sub-problems are then combined to solve the original problem\cite{dac2}. As a classic yet powerful strategy, it underpins various domains in computer science. 

In Merge Sort, proposed by von Neumann, an array is halved recursively until each sub-array contains a single element, after which "merge" operations combine these ordered sub-arrays into a larger sorted set. The Fast Fourier Transform (FFT)\cite{fft} is also centered on a divide-and-conquer algorithm. It decomposes a Discrete Fourier Transform (DFT) by odd and even indices into smaller sub-transforms. By exploiting the periodicity and symmetry of complex exponential functions (roots of unity), it reduces the computational complexity from $O(N^2)$ to $O(N \log N)$, making it a revolutionary foundation of digital signal processing.

Inspired by this paradigm, we propose the Divide-and-Conquer Inference (DCI) strategy to address the inherent challenges of MLLMs in image classification. Empirical evidence suggests that while MLMs struggle with recognition accuracy when handling long sequences of categories, they maintain robust performance on shorter sequences. The root cause of this degradation lies in the attention mechanism, where an excessive number of categories introduces significant distractors that impair effective feature extraction.

Through the DCI strategy, we deconstruct the complex, long-sequence classification task into a series of sub-sequence recognition problems. This transformation constrains the model’s focus to a limited set of candidates at each stage, thereby effectively suppressing noise interference within the attention mechanism and enhancing inference robustness. These sub-tasks are solved independently, with their outputs serving as inputs for the subsequent round of inference. As the recursion progresses, the problem scale and difficulty diminish, while each sub-task remains structurally isomorphic to the original problem. Moreover, through this grouping-based divide-and-conquer mechanism, DCI dramatically undercuts the standard $\mathcal{O}(N^2)$ flat inference baseline. This enables our approach to achieve highly accelerated inference speeds in extreme-scale image classification scenarios, particularly on massive benchmarks like ImageNet-21K\cite{imagenet}.

\section{Theoretical Analysis of the PC-LSR}

To understand the root causes of the empirical performance collapse observed in our benchmarks, we provide a formal information-theoretic analysis in this section. By modeling the MLLM inference process as an information-constrained channel, we aim to demonstrate that the observed degradation stems from a fundamental architectural bottleneck—the intrinsic inability of the attention-based flat inference paradigm—which performs classification by considering all $N$ candidate labels within a single forward pass—to sustain information density as the label space scales.

\subsection{Preliminaries and Problem Setup}
We model the MLLM based image classification as an information-constrained Markov chain: $Y \to X \to \hat{Y}$, where:
\begin{itemize}
    \item $Y \in \mathcal{Y}$ is the ground truth label with a label space size $|\mathcal{Y}|=K$.
    \item $X$ represents the visual features extracted from the input image.
    \item $\hat{Y} \in \mathcal{Y}$ is the model's prediction given a specific prompt $\mathcal{P}_K$.
\end{itemize}
Our primary objective is to analyze the asymptotic behavior of the classification error $P_e = P(\hat{Y} \neq Y)$ as the number of candidates $K$ scales toward infinity.

\subsection{Foundational Assumptions and Lemmas}

To characterize the gap between the information required for task resolution and the information supplied by the attention-based architecture, we introduce the following assumptions.

\begin{assumption}[Dispersed Label Distribution]
\label{asmp:distribution}
The label distribution is non-degenerate. Specifically, there exists a constant $\beta \geq 1$ such that for all $k \in \mathcal{Y}$, the prior probability satisfies $p_k \leq \frac{\beta}{K}$. This ensures that the task difficulty scales with the size of the candidate set. All known datasets satisfy this assumption.
\end{assumption}

\begin{lemma}[Uncertainty Scaling]
\label{lem:entropy}
Under Assumption \ref{asmp:distribution}, the label entropy $H(Y)$, which represents the theoretical information demand, scales logarithmically with the size of the label space $K$:
\begin{equation}\label{eq:lemma1}
    H(Y) \geq \log K - \log \beta
\end{equation}
\end{lemma}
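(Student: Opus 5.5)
The plan is to bound each term of the entropy sum pointwise using Assumption~\ref{asmp:distribution}. By definition,
\begin{equation*}
H(Y) = -\sum_{k\in\mathcal{Y}} p_k \log p_k = \sum_{k\in\mathcal{Y}} p_k \log \frac{1}{p_k}.
\end{equation*}
We may restrict the sum to labels with $p_k>0$, using the convention $0\log 0 = 0$. For every such label the assumption gives $p_k \leq \beta/K$, so $1/p_k \geq K/\beta$. Since the logarithm is monotone increasing, this yields $\log(1/p_k) \geq \log K - \log\beta$.

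Next, multiply each inequality by $p_k \geq 0$ and sum over $k$. Using $\sum_k p_k = 1$, we get
\begin{equation*}
H(Y) \geq \sum_{k} p_k(\log K - \log\beta) = \log K - \log \beta,
\end{equation*}
which is exactly \eqref{eq:lemma1}. Equivalently, one can phrase this step as $H(Y) \geq H_\infty(Y) = -\log \max_k p_k$: Shannon entropy dominates min-entropy, and the assumption bounds $\max_k p_k$ by $\beta/K$.

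There is no real obstacle here. The only points needing care are the zero-probability terms, handled by the usual convention, and the remark that the bound is informative only when $\beta < K$; otherwise the right-hand side is nonpositive and the inequality holds trivially. The log base is arbitrary, since it is used consistently on both sides. As a sanity check, $\beta = 1$ forces the uniform distribution and recovers $H(Y) = \log K$ with equality.
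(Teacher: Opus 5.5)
Your proof is correct and follows essentially the paper's own argument: bound $\log(1/p_k) \geq \log(K/\beta)$ termwise using $p_k \leq \beta/K$, weight by $p_k$, and sum using $\sum_k p_k = 1$. Your handling of zero-probability terms and your min-entropy reformulation are sound refinements that the paper omits.
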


\begin{proof}
By the definition of Shannon entropy:
\begin{align}\label{eq:lemma1_proof}
    H(Y) &= \sum_{k=1}^K p_k \log \frac{1}{p_k} \nonumber \\
         &\geq \sum_{k=1}^K p_k \log \left( \frac{K}{\beta} \right) \quad (\text{since } p_k \leq \beta/K \implies 1/p_k \geq K/\beta) \nonumber \\
         &= (\log K - \log \beta) \sum_{k=1}^K p_k = \log K - \log \beta
\end{align}
In the special case of a \textbf{Uniform Distribution} ($p_k = 1/K$), the entropy reaches its maximum $H(Y) = \log K$.
\end{proof}

While Lemma \ref{lem:entropy} quantifies the increasing difficulty of the task as the label space expands, it only accounts for the 'demand' side of the information equation. The following lemma shifts focus to the 'supply' side, analyzing how the transformer's core mechanism-attention—behaves when tasked with processing an extensive number of candidates.

\begin{lemma}[Attention Dilution]
\label{lem:dilution}
In the attention mechanism, let $\alpha_k$ be the attention weight assigned to the $k$-th candidate. For analytical tractability, we analyze the scaling behavior of Softmax-based attention under a bounded and non-dominant logit regime. As the number of competing candidates increases, the normalization in Softmax spreads the probability mass across all candidates, resulting in an $O(1/K)$ decay in the expected attention assigned to each individual candidate. Consequently, the effective mutual information $I(Y; \hat{Y})$ is bounded by a constant $I_{\max}$ representing the model's fixed ``attention bandwidth'':
\begin{equation}\label{eq:lemma2}
    I(Y; \hat{Y}) \leq I_{\max} < \infty, \quad \forall K \gg 1
\end{equation}
\end{lemma}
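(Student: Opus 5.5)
We prove the two claims in turn: first the $O(1/K)$ decay of per-candidate attention, then the bound on $I(Y;\hat{Y})$.

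\textbf{Step 1: per-candidate attention decays as $O(1/K)$.} Let the query associated with the decision token produce logits $s_1,\dots,s_K$ over the candidate-label tokens. The bounded, non-dominant regime means there is a constant $M$, independent of $K$, with $|s_k|\le M$ for all $k$. The Softmax weight then satisfies
\begin{equation*}
\alpha_k=\frac{e^{s_k}}{\sum_{j=1}^K e^{s_j}}\le \frac{e^{M}}{K e^{-M}}=\frac{e^{2M}}{K}.
\end{equation*}
Likewise $\alpha_k\ge e^{-2M}/K$. Hence every candidate, including the correct one, receives attention of order $\Theta(1/K)$. Taking expectations over images preserves this bound.

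\textbf{Step 2: the attended signal is a convex mixture of bounded contributions.} The output of the attention head is $z=\sum_k \alpha_k v_k$, where each value vector obeys $\|v_k\|\le V$. The component of $z$ that carries label-specific evidence is $\alpha_{Y} v_{Y}$, whose norm is at most $e^{2M}V/K$. The remaining $K-1$ terms act as interference with total weight $1-O(1/K)$. The resulting signal-to-noise ratio therefore scales like $O(1/K)$, and it certainly does not grow with $K$.

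\textbf{Step 3: bound the mutual information.} Because $\hat{Y}$ is computed from the finite-precision hidden state, the chain $Y\to X\to(\text{hidden representation } Z)\to\hat{Y}$ is Markov. By data processing, $I(Y;\hat{Y})\le I(Y;Z)$. We then bound $I(Y;Z)$ by a quantity independent of $K$. The cleanest route is a capacity argument. Model the label-relevant part of $Z$ as a signal of bounded power $P$, passed through a channel of fixed width $d$ and corrupted by the interference or noise of variance at least $\sigma^2>0$ coming from Step 2. The Gaussian-channel bound then gives
\begin{equation*}
I(Y;Z)\le \frac{d}{2}\log\Bigl(1+\frac{P}{\sigma^2}\Bigr)=:I_{\max}.
\end{equation*}
Both $d$ and $P/\sigma^2$ are architectural constants that Steps 1--2 show do not increase with $K$; indeed the SNR shrinks. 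So $I_{\max}<\infty$ uniformly for $K\gg1$. A cruder alternative avoids the noise model: if $Z$ lives in $b$-bit arithmetic with dimension $d$, then $I(Y;Z)\le H(Z)\le bd$.

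\textbf{Main obstacle.} The difficult step is Step 3. Turning the $O(1/K)$ attention decay into an information bound requires a noise model, since a deterministic continuous map can carry unbounded information. I would state explicitly that a nondegenerate effective noise floor $\sigma^2>0$ is present, whether from finite precision or from the interference in Step 2. The honest conclusion is then that $I_{\max}$ is an architecture-dependent constant. Combined with Lemma~\ref{lem:entropy}, this sets up a Fano-type argument for collapse.
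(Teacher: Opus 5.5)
Your proposal is correct, at least as rigorous as the paper's argument, and follows the paper's skeleton: dilution of $\alpha_k$, then a vanishing SNR, then a Shannon-type capacity bound. You execute both key steps differently, though.

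For the decay, the paper treats $Z_j=\exp(e_j)$ as random variables with a common finite mean and variance. It applies the strong law of large numbers to the denominator and uses a first-order delta-method step to get $E[\alpha_k]\approx 1/K$. You instead formalize the ``bounded, non-dominant'' regime as $|s_k|\le M$ and obtain the pathwise sandwich $e^{-2M}/K\le\alpha_k\le e^{2M}/K$. This is a genuine inequality rather than an approximation, needs no independence or moment assumptions, and uses the lemma's stated hypothesis directly. The paper's route buys the exact constant $1/K$, which it then uses to compute $\mathrm{SNR}_k=1/(K-1)$.

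For the information bound, the paper substitutes this ratio of attention masses straight into the Shannon--Hartley formula, obtaining $I(Y;\hat Y)\le W\log_2\bigl(1+\tfrac{1}{K-1}\bigr)$. That bound even decreases in $K$, but it rests on an analogy: the attention-mass ratio is not the SNR of any explicitly defined additive-noise channel. You make the channel explicit through data processing, then use either a fixed-width Gaussian channel with a stated noise floor or the finite-precision bound $H(Z)\le bd$.

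Two points need tightening.

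First, $\frac{d}{2}\log(1+P/\sigma^2)$ is guaranteed to upper-bound $I(Y;Z)$ only when the noise is Gaussian and independent of the signal. At fixed variance, Gaussian noise is the \emph{worst} case, so other noise can carry more information. The Step~2 interference is neither Gaussian nor independent of $Y$, since it is a deterministic function of the image. So ``variance at least $\sigma^2$'' must be replaced by an explicit Gaussian (or entropy-power) noise assumption.

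Second, your rigorous fallback $I(Y;\hat Y)\le H(Z)\le bd$ never uses Steps~1--2. It also requires $Z$ to be a fixed-dimensional bottleneck through which $\hat Y$ is computed, such as the decision-position states of a bounded-length answer, not the $K$-dependent context. So it proves the displayed inequality but not the ``consequently'' link to dilution. Only your modeled Gaussian route supplies that link, as does the paper's analogy.
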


\begin{proof}
Let the attention weight be defined by the Softmax function 
\begin{equation}\label{eq:softmax}
\alpha_k = \frac{\exp(e_k)}{  \sum_{j=1}^K \exp(e_j)}
\end{equation}
Let $Z_j = \exp(e_j)$ be the transformed random variables. Assume $\{Z_j\}_{j=1}^K$ are random variables with common finite mean $E[Z_j] = \mu > 0$ and finite variance $\mathrm{Var}(Z_j) = \sigma^2 < \infty$.

According to the \textit{Kolmogorov Strong Law of Large Numbers} (LLN), as $K \to \infty$, the denominator satisfies:
\begin{equation}\label{eq:lemma2_mean1}
    \frac{1}{K} \sum_{j=1}^K Z_j \xrightarrow{a.s.} \mu
\end{equation}
Applying the first-order Taylor expansion (Delta Method) for the expectation of the ratio, we have:
\begin{equation}\label{eq:lemma2_mean2}
    E[\alpha_k] = E\left[ \frac{Z_k}{\sum_{j=1}^K Z_j} \right] \approx \frac{E[Z_k]}{K \cdot \mu} = \frac{\mu}{K\mu} = \frac{1}{K}
\end{equation}
This confirms that $E[\alpha_k] = O(1/K)$. In this information-constrained channel, we define the \textit{Effective Attention SNR} as the ratio of the expected signal power to the expected noise power:
\begin{equation}\label{eq:snr_refined}
    \text{SNR}_k \triangleq \frac{E[\alpha_k]}{E[\sum_{j \neq k} \alpha_j]} = \frac{1/K}{1 - 1/K} = \frac{1}{K-1}
\end{equation}
As $K \to \infty$, $\text{SNR}_k$ vanishes. Inspired by Shannon-Hartley Theorem\cite{shannon}, the channel capacity $C$ (the upper bound of $I(Y; \hat{Y})$) is given by $W \log_2(1 + \text{SNR})$, where $W$ is a constant representing the model's fixed attention bandwidth (e.g., embedding dimension and heads). Substituting $\text{SNR}_k$:
\begin{equation}\label{eq:shannon_bound}
    I(Y; \hat{Y}) \leq W \log_2\left(1 + \frac{1}{K-1}\right)
\end{equation}
The result implies that as the label space $K$ scales, the extractable mutual information $I(Y; \hat{Y})$ fails to sustain the density required for accurate classification, eventually saturating at a constant $I_{\max}$.
\end{proof}

\subsection{Main Result: The Information Deficit Theorem}

\begin{theorem}[Information Deficit Theorem]
\label{thm:main}
In a flat label space of size $K$, the classification error $P_e(K)$ of an MLLM is lower-bounded by:
\begin{equation}
    P_e(K) \geq 1 - \frac{I_{\max} + \log (2\beta)}{\log K}
\label{eq:final_bound}
\end{equation}
Consequently, the error probability tends to increase asymptotically as the label space expands.
\end{theorem}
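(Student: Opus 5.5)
The plan is to combine the two lemmas through Fano's inequality applied to the Markov chain $Y \to X \to \hat{Y}$. Fano's inequality, in the weakened form that drops the $\log(K-1)$ refinement, reads
\begin{equation*}
    H(Y \mid \hat{Y}) \leq h(P_e) + P_e \log (K-1) \leq \log 2 + P_e \log K,
\end{equation*}
where $h(\cdot)$ is the binary entropy. This uses $h(P_e) \leq \log 2$ (or $\leq 1$ bit, with logarithms in base $2$) and $\log(K-1) \leq \log K$. The only delicate choice here is to keep every logarithm in one base, so that the additive constant becomes exactly $\log 2$. That is what produces the $\log(2\beta)$ in (\ref{eq:final_bound}).

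Next I will write $H(Y \mid \hat{Y}) = H(Y) - I(Y;\hat{Y})$ and bound the two terms separately. The demand side comes from Lemma~\ref{lem:entropy}, which gives $H(Y) \geq \log K - \log \beta$. The supply side comes from Lemma~\ref{lem:dilution}, which gives $I(Y;\hat{Y}) \leq I_{\max}$ for all $K \gg 1$. Together these yield
\begin{equation*}
    \log K - \log \beta - I_{\max} \leq \log 2 + P_e \log K .
\end{equation*}
Rearranging and dividing by $\log K > 0$ (valid for $K \geq 2$) gives
\begin{equation*}
    P_e(K) \geq 1 - \frac{I_{\max} + \log 2 + \log\beta}{\log K} = 1 - \frac{I_{\max} + \log(2\beta)}{\log K},
\end{equation*}
which is exactly (\ref{eq:final_bound}).

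For the asymptotic conclusion, $I_{\max}$ and $\beta$ do not depend on $K$, so the fraction tends to $0$ as $K \to \infty$. The lower bound therefore increases monotonically toward $1$. This means the error can be forced arbitrarily close to $1$, not merely kept away from $0$.

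The main obstacle is not the algebra but the interface with Lemma~\ref{lem:dilution}. Fano needs a bound on $I(Y;\hat{Y})$ that is uniform in $K$, whereas the lemma is stated only for $K \gg 1$ and its proof produces a heuristic capacity bound. I will therefore take the lemma exactly as stated, as a constant bound $I_{\max}$ valid in the regime of interest, and state the theorem in that regime. I will also note that the lemma's bound (\ref{eq:shannon_bound}) actually decays as $K$ grows, so taking $I_{\max} = \sup_{K \geq 2} W \log_2(1 + 1/(K-1)) = W$ gives a legitimate uniform constant. I will also check that the information units in the lemma (bits, $\log_2$) match those used in Fano's inequality, converting if necessary. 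A change of base only rescales $I_{\max}$ and leaves the form of the bound unchanged.
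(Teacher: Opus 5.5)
Your proposal is correct and follows the paper's proof essentially step for step. Both arguments apply Fano's inequality with $h(P_e)\le\log 2$ and $\log(K-1)<\log K$, use $H(Y\mid\hat Y)=H(Y)-I(Y;\hat Y)$, substitute the Uncertainty Scaling and Attention Dilution lemmas, and let $K\to\infty$; your remarks on keeping logarithm bases consistent and on taking $I_{\max}=W$ as a $K$-uniform constant are sensible refinements that the paper leaves implicit.
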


\begin{proof}
We begin with \textbf{Fano's Inequality\cite{fano}}, which relates the conditional entropy $H(Y|\hat{Y})$ to the error probability $P_e$:
\begin{equation}\label{eq:result_proof1}
    H(Y|\hat{Y}) \leq h(P_e) + P_e \log(K-1)
\end{equation}
where $h(P_e) \leq \log 2$ is the binary entropy. Using the identity $H(Y|\hat{Y}) = H(Y) - I(Y; \hat{Y})$ and the approximation $\log(K-1) < \log K$:
\begin{equation}\label{eq:result_proof2}
    P_e \geq \frac{H(Y) - I(Y; \hat{Y}) - \log 2}{\log K}
\end{equation}

\textbf{Step 1: Substituting Demand and Supply.} Applying the results from Lemma \ref{lem:entropy} (Information Demand) and Lemma \ref{lem:dilution} (Information Supply):
\begin{align}\label{eq:result_proof3}
    P_e &\geq \frac{(\log K - \log \beta) - I_{\max} - \log 2}{\log K} \nonumber \\
        &= \frac{\log K - (I_{\max} + \log \beta + \log 2)}{\log K} \nonumber \\
        &= 1 - \frac{I_{\max} + \log (2\beta)}{\log K}
\end{align}

\textbf{Step 2: Asymptotic Analysis.} 
As $K \to \infty$, the constant term $C = I_{\max} + \log (2\beta)$ remains fixed. Thus:
\begin{equation}\label{eq:result_proof4}
    \lim_{K \to \infty} P_e(K) \geq \lim_{K \to \infty} \left( 1 - \frac{C}{\log K} \right) = 1
\end{equation}
Since $P_e \leq 1$ by definition, we conclude that the error probability $P_e$ approaches 1 as $K$ increases.
\end{proof}

\subsection{Discussion}
The Information Deficit Theorem (Theorem \ref{thm:main}) formalizes the fundamental collapse of flat classification paradigms in large-scale MLLM inference. This collapse is driven by a widening gap between the task's \textit{informational demand} and the model's \textit{architectural supply}:

\begin{enumerate}

\item \textbf{The Log-Linear Imbalance}: As the candidate set size $K$ increases, the information required to resolve label uncertainty grows logarithmically ($\log K$). In contrast, the effective information extracted by the model diminishes due to the inherent $O(1/K)$ attention dilution effect, creating an increasing gap between informational demand and model capacity.

\item \textbf{The Saturation of Discriminability}: The fixed attention capacity $I_{\max}$ imposes an upper bound on the model's discriminative ability. Once the entropy requirement exceeds this limit, the signal of the correct candidate is overwhelmed by distractors, causing the decision process to degenerate toward random guessing.

\end{enumerate}
In summary, Theorem \ref{thm:main} explains why standard MLLM prompting fails in large-scale classification tasks. The identified information deficit indicates that single-step global inference faces inherent limitations as the label space expands, motivating a shift from flat inference to structured approaches such as our proposed Divide-and-Conquer Inference (DCI) framework.

\section{Methodology}

Existing studies mainly improve classification performance by scaling model size or optimizing architectures. However, such approaches are often limited by high computational costs, extensive training data requirements, and complex optimization processes. In contrast, we propose Divide-and-Conquer Inference (DCI), a training-free framework that achieves test-time scaling by reducing the candidate space size $K$. Unlike sequential strategies such as Chain-of-Thought (CoT), DCI adopts a hybrid scaling paradigm tailored for vision tasks, alleviating perceptual burden through spatial decomposition.

\begin{figure}[h!]
    \centering
    \includegraphics[width=0.99\linewidth]{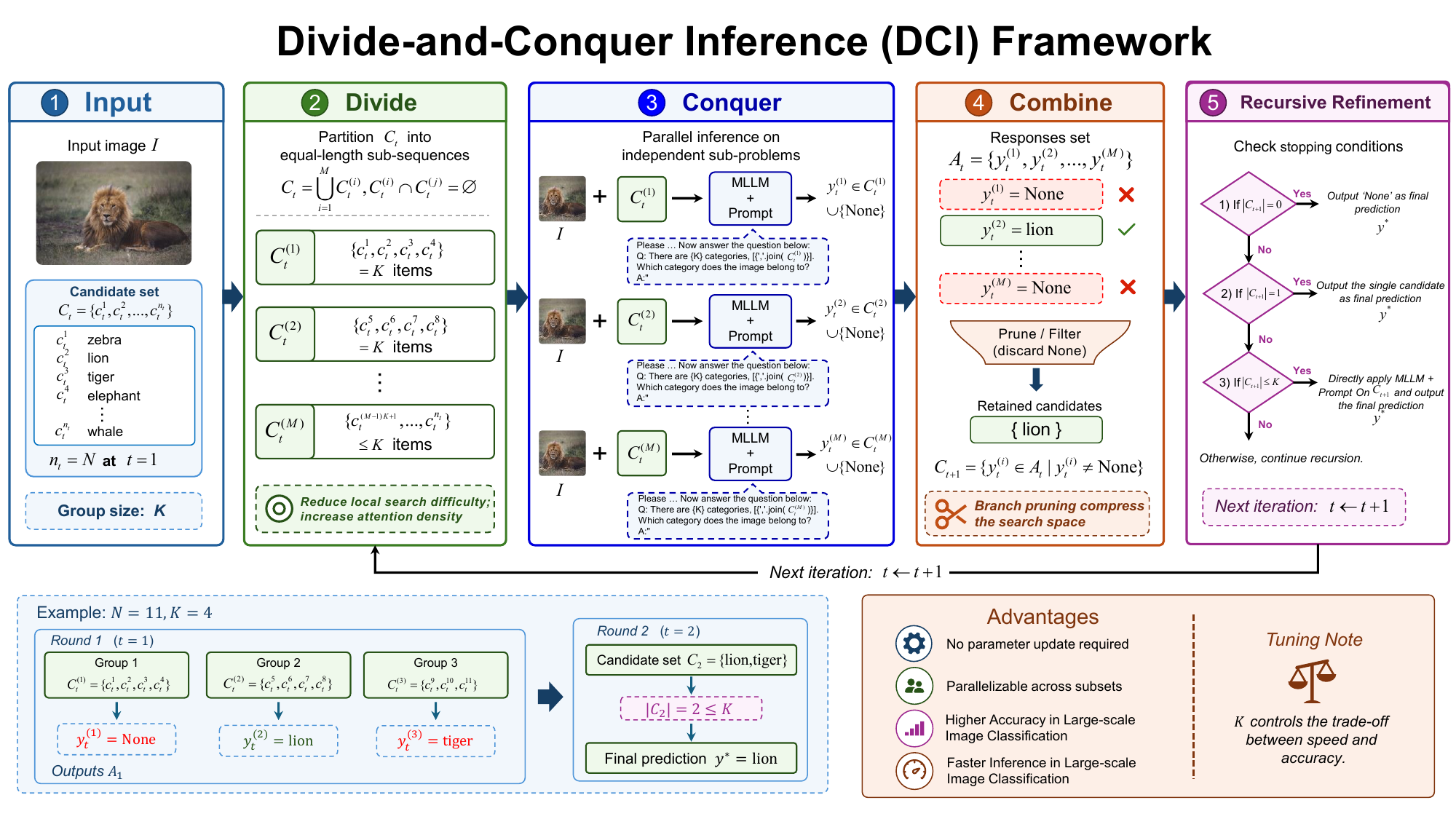}
    \caption{Overview of the Divide-and-Conquer Inference (DCI) framework. The process involves three main stages: (1) Divide: partitioning the candidate space into smaller subsets to increase local attention density; (2) Conquer: performing parallel MLLM inference on each subset; and (3) Combine: aggregating results and applying recursive branch pruning to narrow down the search space.}
    \label{fig:method}
\end{figure}

The necessity of DCI is rooted in the Information Deficit Theorem (Theorem 1). As the category scale $N$ increases, the classification task transitions from a Simple Problem ($q_{\text{simple}}$) to a Hard Problem ($q_{\text{hard}}$), where classification boundaries become blurred and overcrowded. This collapse is mathematically explained by Lemma \ref{lem:dilution} (Statistical Weight Dilution). To bridge this theoretical gap, the DCI framework recursively decomposes $q_{\text{hard}}$ into a set of isomorphic $q_{\text{simple}}$ instances. By systematically compressing the search space, the method restores the signal-to-noise ratio in each local inference pass. The execution cycle consists of three core phases: \textbf{Divide}, \textbf{Conquer}, and \textbf{Combine}. Fig.\ref{fig:method} illustrates the overall procedure and underlying principles of our proposed DCI method. The following sections detail the mathematical mechanisms and the implementation workflow of these stages.

\subsection{Divide Phase}

As illustrated in the Fig.\ref{fig:method}, during the Divide phase, the category set of the current iteration is partitioned into several sub-sequences of equal length. Let $K$ denote the length of each sub-sequence. We define $t \in \{1, 2, \dots, T\}$ as the current iteration index, where the maximum number of iterations $T$ satisfies $1 \leq T \leq \lceil \log_{K} N \rceil$. The input category sequence for the current iteration is denoted as $C_t = \{c_t^1, c_t^2, \dots, c_t^{n_t}\}$, where $n_t$ is the length of the input sequence at iteration $t$; specifically, $n_t=N$ when $t=1$.

When $n_t$ is divisible by $K$ (i.e., $n_t \bmod K = 0$), the sequence is uniformly partitioned into $M = \frac{n_t}{K}$ sub-sequences, with the $i$-th sub-sequence expressed as:
\begin{equation}
    C_t^{(i)} = (c_t^{(i-1)K+1}, c_t^{(i-1)K+2}, \dots, c_t^{iK}), \quad i = 1, 2, \dots, M
\end{equation}
In cases where $n$ is not divisible by $K$ (i.e., $n \bmod K \ne 0$), the total number of partitions is $M = \lceil \frac{n_t}{K} \rceil$, and the resulting sub-sequences are defined as:
\begin{equation}
    C_t^{(i)} =
    \begin{cases}
    (c_t^{(i-1)K + 1}, c_t^{(i-1)K + 2}, \dots, c_t^{iK}), & 1 \leq i \leq M-1 \\[6pt]
    (c_t^{(i-1)K + 1}, \dots, c_t^{n_t}), & i = M
    \end{cases}
\end{equation}
In summary, the collection of partitioned sub-sequences satisfies the following conditions:
\begin{equation}
    C_t = \bigcup_{i=1}^{M} C_t^{(i)}, \quad C_t^{(i)} \cap C_t^{(j)} = \varnothing \quad (i \ne j)
\end{equation}

In the Divide phase, by performing an equivalent partition of the original category space, we aim to counteract the dilution effect of attention weights and maximize local attention density. This strategy ensures that the input information aligns with the model's optimal "attention bandwidth."

\subsection{Conquer Phase}

\begin{figure}[h!]
    \centering
    \includegraphics[width=0.99\linewidth]{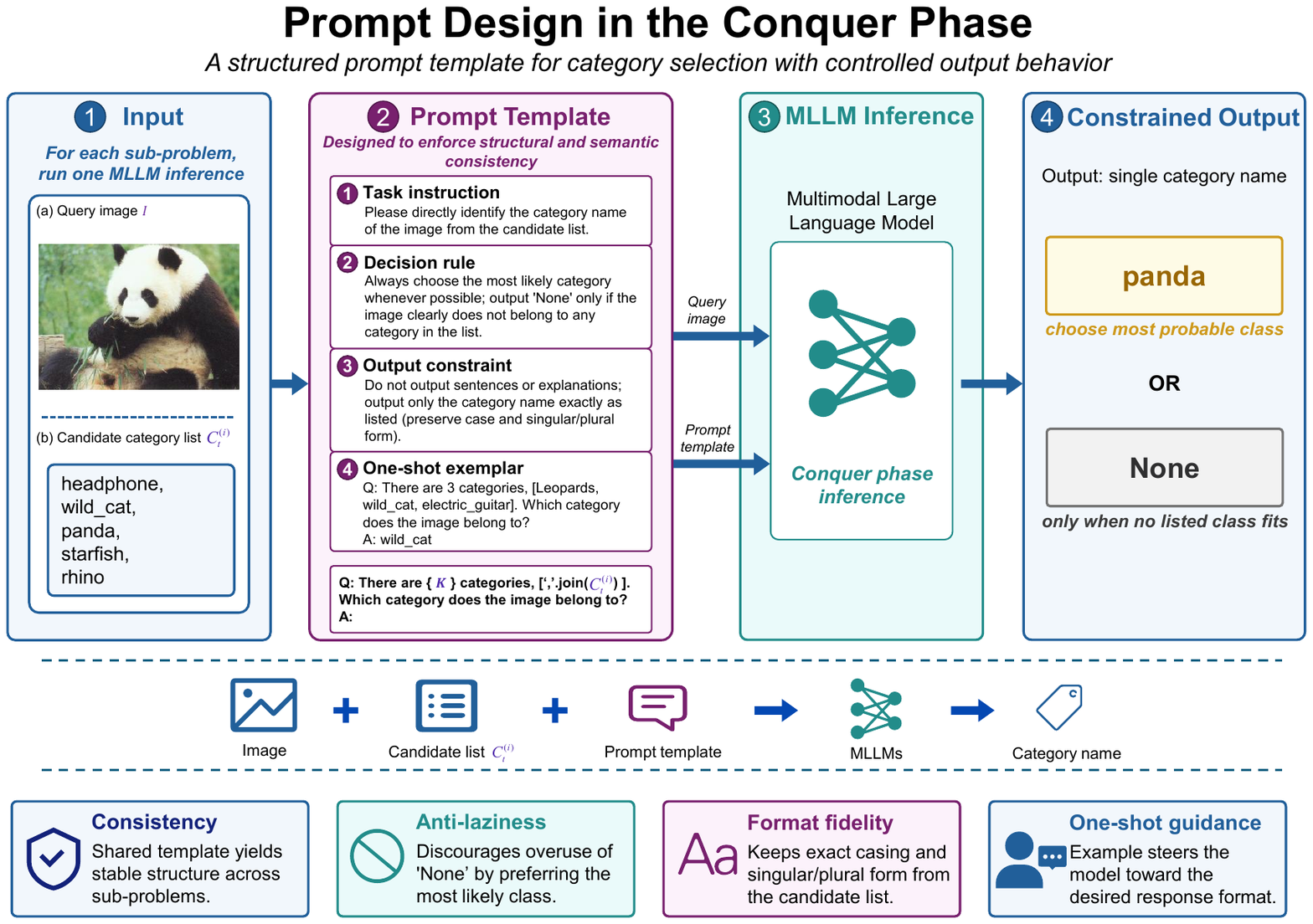}
    \caption{Overview of the Conquer Phase inference workflow. A structured prompt template, incorporating decision rules and one-shot guidance, is used to steer the MLLM toward generating a single, format-constrained category name from a candidate list.}
    \label{fig:conquer}
\end{figure}

In the Conquer phase, a MLLM inference is executed for each sub-problem. We employ a prompt template $f_{prompt}$ designed to enforce structural and semantic consistency:

Fig.\ref{fig:conquer} illustrates several strategies for prompt template design to ensure controllable output behavior from LLMs. Within this template, if the correct category is absent from the sub-sequence $C_t^{(i)}$, the model is instructed to output ``None.'' To mitigate ``model laziness''---a phenomenon where the model over-relies on the null option---the prompt encourages identifying the most probable object rather than defaulting to ``None.'' Furthermore, strict constraints are imposed on the output format to ensure that the casing and plurality of category names remain identical to the input. Finally, a one-shot exemplar is provided to guide the model toward the desired response format.

Based on the partitioned sub-sequences, we construct a set of sub-queries $\mathcal{Q}_t = \{ q_t^{(1)}, q_t^{(2)}, \dots, q_t^{(M)}\}$. Each sub-query $q_t^{(i)}$ is formulated by applying the prompt template $f_{prompt}$ to the corresponding category subset $C_t^{(i)}$:
\begin{equation}
    q_t^{(i)} = f_{prompt}(C_t^{(i)})
\end{equation}
Subsequently, the image $I$ and the sub-query $q_t^{(i)}$ are fed into the MLLM, parameterized by $\theta$, to obtain the response:
\begin{equation}
    y_t^{(i)} = \mathcal{M}_{\text{MLLM}}(I, q_t^{(i)}; \theta)
\end{equation}
As dictated by the prompt design, the output space for $y_t^{(i)}$ is constrained such that:
\begin{equation}
    y_t^{(i)} \in C_t^{(i)} \cup \{ \text{``None''} \}
\end{equation}

For multimodal large models, the sub-query $q_t^{(i)}$ represents a significantly lower cognitive load compared to the original hard problem $q_{\text{hard}}$. Consequently, DCI effectively decomposes a complex classification task into multiple $q_{\text{simple}}$ instances. In addition, since these sub-problems are mutually independent, they can be processed via concurrent and parallel computing, which substantially optimizes inference efficiency, as detailed in the Algorithm~\ref{alg:D&CI}.

\subsection{Combine Phase}

Upon the completion of the parallel inference in the Conquer stage, we obtain a response set $\mathcal{A}_t = \{ y_t^{(1)}, y_t^{(2)}, \dots, y_t^{(M)} \}$. Since the majority of sub-sequences $C_t^{(i)}$ do not contain the ground-truth category, a large proportion of elements in $\mathcal{A}_t$ are "None," representing invalid search branches. To facilitate an efficient recursive search, the Combine phase performs Branch Pruning by filtering out these null responses. This process isolates potential target categories to construct the input set for the subsequent iteration $C_{t+1}$:

\begin{equation}
C_{t+1} = \{ y_t^{(i)} \in \mathcal{A}_t \mid y_t^{(i)} \neq \text{None} \}
\end{equation}
A special case occurs when $\mathcal{A}_t$ consists entirely of "None," resulting in an empty set $C_{t+1} = \varnothing$ and a final negative prediction. This outcome typically stems from the absence of the target object or an identification failure by the MLLM, leading to a misclassification.

This mechanism significantly compresses the search space, offering two primary advantages:
\begin{itemize}
    \item \textbf{Spatial Sparsification:} By eliminating redundant branches, the initially dense global search space is transformed into a sparse, highly focused set of candidates.
    \item \textbf{Computational Reduction:} The number of sub-problems generated in subsequent recursive rounds is substantially decreased, ensuring that computational resources are concentrated on high-confidence regions.
\end{itemize}

The DCI framework continues this cycle of partitioning and pruning until a stable prediction is reached or the candidate set converges, effectively navigating the trade-off between search breadth and inferential precision.

\subsection{Recursive Hierarchy and Termination Guarantees}

\begin{algorithm}[!ht]
\caption{Divide-and-Conquer Inference (DCI)}
\label{alg:D&CI}
\begin{algorithmic}[1]
\renewcommand{\algorithmicrequire}{\textbf{Input:}}
\renewcommand{\algorithmicensure}{\textbf{Output:}}
\REQUIRE Image $I$, Group size $K$, Candidate label set $C_{in}$, Model $\mathcal{M}_{\text{MLLM}}$
\ENSURE Final predicted category $y^*$

\STATE \textbf{Function} D\&C\_Inference($I, K, C_t$)
    \IF{$|C_t| \leq K$} 
        \RETURN $y^* \leftarrow \mathcal{M}_{\text{MLLM}}(I, f_{prompt}(C_t))$
    \ENDIF

    \STATE \COMMENT{\textbf{Divide}: Partition $C_t$ into $M = \lceil |C_t|/K \rceil$ disjoint subsets}
    \STATE $\mathcal{S}_t \leftarrow \{C_t^{(1)}, C_t^{(2)}, \dots, C_t^{(M)}\}$ such that $\bigcup_{i=1}^M C_t^{(i)} = C_t$ and $|C_t^{(i)}| \leq K$
    
    \STATE $\mathcal{A}_t \leftarrow \emptyset$
    \FOR{\textbf{each} $C_t^{(i)} \in \mathcal{S}_t$ \textbf{in parallel}}
        \STATE $y_t^{(i)} \leftarrow \text{D\&C\_Inference}(I, K, C_t^{(i)})$
        \STATE $\mathcal{A}_t \leftarrow \mathcal{A}_t \cup \{y_t^{(i)}\}$
    \ENDFOR

    \STATE $C_{t+1} \leftarrow \{ y \in \mathcal{A}_t \mid y \neq \text{None} \}$

    \IF{$|C_{t+1}| = 0$}
        \RETURN None
    \ELSIF{$|C_{t+1}| = 1$}
        \RETURN $C_{t+1}[0]$
    \ELSE
        \RETURN D\&C\_Inference($I, K, C_{t+1}$)
    \ENDIF
\end{algorithmic}
\end{algorithm}

Upon obtaining the refined candidate set $C_{t+1}$, it serves as the input sequence for the subsequent iteration, where the \textit{Divide}, \textit{Conquer}, and \textit{Combine} phases are executed repeatedly. As the recursion progresses, the candidate set monotonically shrinks until its cardinality $|C_{t+1}| \leq K$, at which point the model directly outputs the final prediction.
The termination of DCI is guaranteed by two factors:

\begin{itemize}
    \item \textbf{Candidate Reduction}: Due to the sparsity of ground-truth labels, the filtered candidate set progressively shrinks ($|C_{t+1}| < |C_t|$). The zero-shot capability of MLLMs further enables branch pruning through "None" responses.

    \item \textbf{Bounded Recursion Depth}: Even in the worst case, the problem size decreases at a rate of $1/K$, yielding a maximum recursion depth of $O(\log_K N)$ and ensuring eventual termination.
\end{itemize}

\subsection{Computational Efficiency and Complexity Advantage}\label{sec:ceaca}

To evaluate the efficiency and computational overhead of the Divide-and-Conquer Inference (DCI) framework, we provide a rigorous mathematical analysis. Let $N$ be the total number of categories, $K$ be the group size, and $m = \lceil \log_K N \rceil$ be the maximum number of iterations. We denote $L(K)$ as the computational latency of a single MLLM inference pass for a local group of size $K$.

The time complexity $T(N, K)$ satisfies the following recurrence relation:
\begin{equation}
T(N,K) \leq 
\begin{cases}
L(K), & \text{if } N \leq K \\
\left\lceil \frac{N}{K} \right\rceil \cdot L(K) + T\left(\left\lceil \frac{N}{K} \right\rceil\right), & \text{if } N > K
\end{cases}
\end{equation}

For the general case where $N > K$, the total complexity can be derived through iterative substitution across all hierarchical layers:
\begin{equation}\label{eq:complexity1}
\begin{aligned}
T(N,K) &\leq \sum_{i=1}^{m} \left\lceil \frac{N}{K^i} \right\rceil \cdot L(K) \\
& \leq \sum_{i=1}^{m} \left( \frac{N}{K^i} + 1\right) \cdot L(K) \\
& = \left( \frac{N(K^m-1)}{(K-1)K^m} + m\right) \cdot L(K) \\
& \leq \left( \frac{N}{K-1} + \lceil \log_K N \rceil \right) \cdot L(K)
\end{aligned}
\end{equation}

Consequently, in the worst-case scenario, the total number of MLLM calls required for a single DCI pass does not exceed $\frac{N}{K-1} + \lceil \log_K N \rceil$, whereas the best-case scenario under ideal pruning requires only $\lceil \frac{N}{K} \rceil$ calls. Crucially, in modern Transformer-based MLLMs, the self-attention mechanism incurs a quadratic overhead relative to the context length, meaning the local inference cost scales as $L(K) \sim \mathcal{O}(K^2)$. Substituting this characteristic into our formulation yields the total asymptotic complexity of DCI as a bivariate function of $N$ and $K$ (where $K \leq N$):
\begin{equation}\label{eq:complexity2}
T(N, K) \approx \left( \frac{N}{K} + \log_K N - 1 \right) \cdot K^2
\end{equation}

Geometrically, this expression defines a continuous theoretical surface within a three-dimensional space. Fig.~\ref{fig:complexity}(a) visualizes this 3D global complexity landscape, illustrating how the joint interaction of $N$ and $K$ shapes the overall computational overhead. To better examine the scalable properties, Fig.~\ref{fig:complexity}(b) extracts the 2D scalability profiles as a function of $K$ across discrete data scales ($N \in \{100, 1000, 20000, 50000\}$).

At smaller scales ($N = 100, 1000$), the computational cost monotonically decreases as $K$ increases. Within this regime, DCI behaves as a standard Test-Time Scaling (TTS) strategy, allowing the framework to flexibly trade test-time computation for higher recognition accuracy. However, as the category space scales up to extreme dimensions ($N = 20000, 50000$), the complexity profiles transition into a distinct "U-shaped" topology. In these large-scale scenarios, the total cost no longer decreases monotonically with larger group sizes, as the quadratic inflation of the local cost $L(K)$ eventually dominates the reduction in recursive iterations.

For standard flat inference, the computational complexity scales as $\mathcal{O}(N^2)$ due to the inherent context bottlenecks of global self-attention mechanisms. By plotting these $\mathcal{O}(N^2)$ baselines alongside our profiles in Fig.~\ref{fig:complexity}, a critical crossover effect emerges. For massive category spaces (e.g., extreme-scale benchmarks like ImageNet-21K), DCI substantially undercuts the flat inference baseline within a well-defined interval of $K$. \textbf{In this optimal range, DCI simultaneously delivers superior classification accuracy and reduced computational latency over flat inference.} We define this highly efficient operational window as the \textbf{DCI advantage region}, a fundamental theoretical property that will be consistently mirrored in our subsequent empirical validations.

\begin{figure}[!ht]
    \centering
    \includegraphics[width=0.99\linewidth]{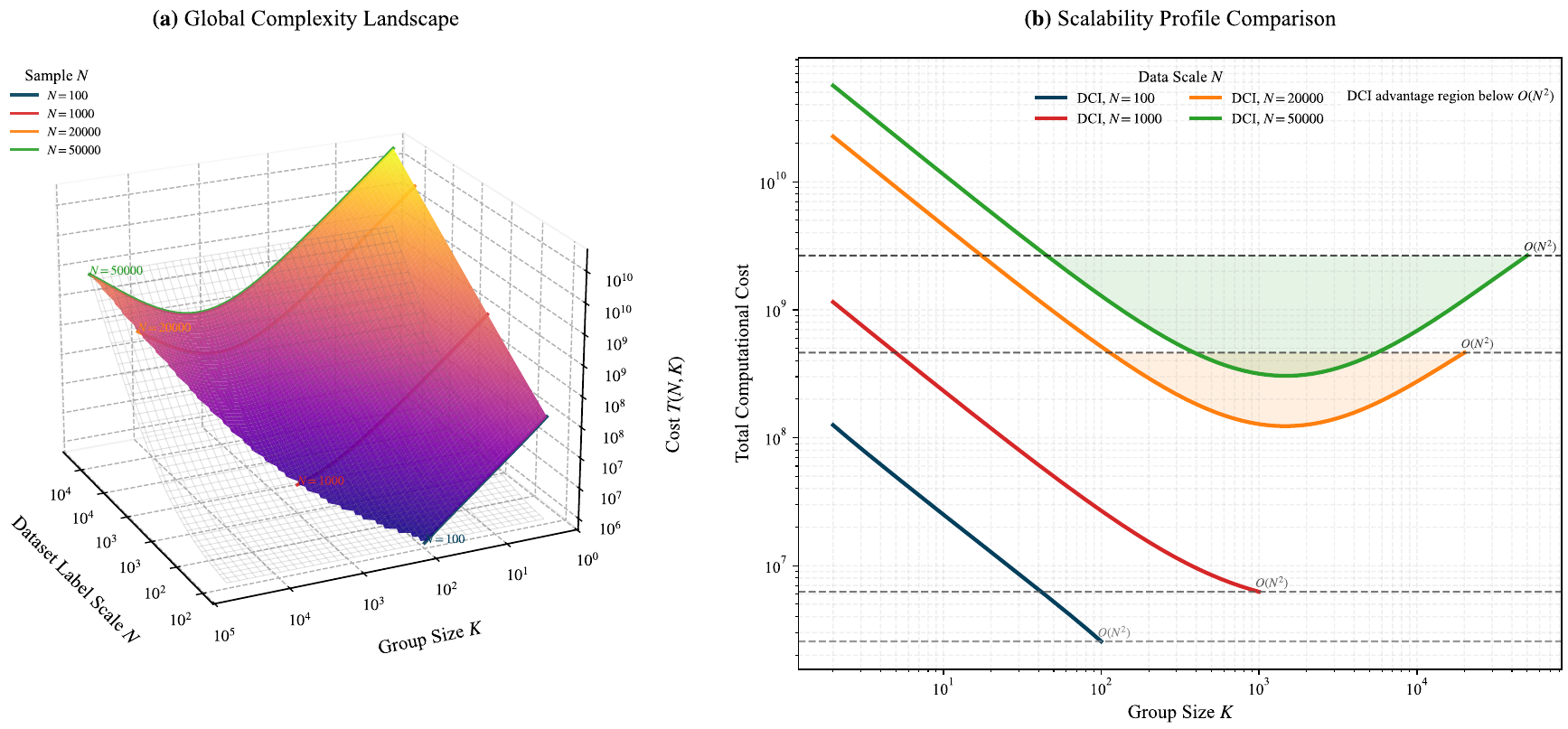}
    \caption{Complexity and scalability analysis. (a) Global Complexity Landscape: Total cost $T(N, K)$ as a function of data scale $N$ and group size $K$. (b) Scalability Profiles Comparision: Comparison of DCI costs across different $N$, highlighting the "U-shaped" curves and the DCI advantage region where costs remain significantly below the $\mathcal{O}(N^2)$ baseline.}
    \label{fig:complexity}
\end{figure}

\section{Experiment}

In this section, we conduct comprehensive experiments to evaluate the proposed DCI framework in terms of effectiveness, scalability, and computational efficiency. We first introduce the experimental settings, including datasets, models, and implementation details. Next, we compare DCI with baseline methods across diverse MLLM architectures and benchmark datasets to validate its overall effectiveness. We then investigate the performance of DCI in large-scale image classification scenarios with up to 21,000 categories, focusing on its robustness against PC-LSR. Finally, ablation studies are conducted to analyze the impact of key hyperparameters and grouping strategies on model performance.

\subsection{Experimental Setup}
All experiments run on a server equipped with an AMD EPYC 7642 processor, four NVIDIA RTX 4090 graphics cards, 256GB DDR4 memory, a 2TB solid-state drive, and an 8TB mechanical hard drive. The server environment comprises Ubuntu 22.04.3 LTS as the operating system, Python 3.10, PyTorch 2.8.0, and Hugging Face Transformers 4.57.3. Since we deploy LLM inference services locally, we utilize the vLLM framework (version 0.15.0). All multimodal large models adopt 32-bit floating-point precision and default temperature coefficients and sampling strategies. 

\subsection{Effectiveness of Divide-and-Conquer Inference Across Diverse Benchmarks}
To evaluate the efficacy of our proposed Divide-and-Conquer Inference (DCI) strategy, we conduct extensive experiments across four benchmark datasets. These range from moderate-scale datasets (e.g., CIFAR-100\cite{cifar100}, CUB-200\cite{cub200}, Food-101 \cite{food101}) to large-scale category spaces (ImageNet-1K\cite{imagenet}). To verify the universality of our framework, we select a diverse suite of evaluation models, including open-source MLLMs such as Gemma3-4b\cite{gemma}, GLM-4V-9B\cite{glm}, and LLaMA-3.2-11B\cite{llama}, as well as leading closed-source models like Qwen3-VL-Plus\cite{qwen3} and GPT-4\cite{gpt4}. Furthermore, we perform comprehensive validation across the popular Qwen series (including Qwen2.5-VL\cite{qwen2} and Qwen3-VL\cite{qwen3}) to demonstrate the consistent benefits of DCI. All experiments adopt a fixed group size of $K=10$ for DCI.
\begin{table}[htb!]
  \centering
    \caption{Performance comparison of various Multimodal Large Language Models (MLLMs) with and without the proposed Divide-and-Conquer Inference (DCI) strategy. Accuracy (\%) is reported across multiple benchmarks. $\Delta$ represents the performance gain achieved by DCI. Bold values indicate the best performance within each column.}
  \label{tab:benchmark}
  \small
  \renewcommand{\arraystretch}{1.2} 
    \resizebox{\linewidth}{!}{%
  \begin{tabular}{l cccc | c}
    \toprule
     Model & \textbf{ImageNet-1K\cite{imagenet}} & \textbf{CIFAR-100\cite{cifar100}} & \textbf{CUB-200\cite{cub200}} & \textbf{Food-101\cite{food101}} & \textbf{Mean} \\ 
    \midrule 

    Gemma3-4b\cite{gemma}   & 25.47 & 54.93  & 13.50 & 70.99 & 41.22 \\
    GLM-4V-9B\cite{glm}    & 22.18 & 56.75  &  5.83 & 17.92 & 25.67 \\
    LLaMA-3.2-11B\cite{llama} &  0.53 & 30.81  &  0.33 & 47.23 & 19.73 \\
    
    \rowcolor{lightgrey} \textcolor{gray}{Qwen3-VL-Plus\cite{qwen3}} & \textcolor{gray}{71.44} & \textcolor{gray}{\textbf{82.81}} &  \textcolor{gray}{63.83} & \textcolor{gray}{\textbf{88.71}} & \textcolor{gray}{\textbf{76.70}} \\
    \rowcolor{lightgrey} \textcolor{gray}{GPT-4(v4.1)\cite{gpt4}} & \textcolor{gray}{62.63} & \textcolor{gray}{64.46}  & \textcolor{gray}{\textbf{73.83}} & \textcolor{gray}{80.79} & \textcolor{gray}{70.43} \\

    \midrule
    Qwen2.5-VL-7B\cite{qwen2} & 61.80 & 61.79 & 35.29 & 79.39 & 59.57 \\
    \rowcolor{highlightblue} w/ \textbf{DCI} & \textbf{71.57} & 65.05  & 48.60 & 80.76 & 66.50 \\
    \rowcolor{highlightblue} $\Delta$ & \textcolor{posgreen}{+9.77} & \textcolor{posgreen}{+3.26} & \textcolor{posgreen}{+13.31} & \textcolor{posgreen}{+1.37} & \textcolor{posgreen}{+6.93} \\

    \midrule
    Qwen3-VL-2B\cite{qwen3} & 40.54 & 75.91 & 16.76 & 72.20 & 51.35 \\
    \rowcolor{highlightblue} w/ \textbf{DCI} & 51.36 & 76.03 & 20.13 & 75.78 & 55.83 \\
    \rowcolor{highlightblue} $\Delta$ & \textcolor{posgreen}{+10.82} & \textcolor{posgreen}{+0.12} & \textcolor{posgreen}{+3.37} & \textcolor{posgreen}{+3.58} & \textcolor{posgreen}{+4.48} \\

    \midrule
    Qwen3-VL-4B\cite{qwen3} & 64.81 & 79.71 & 34.54 & 85.25 & 66.08 \\
    \rowcolor{highlightblue} w/ \textbf{DCI} & 69.09 & 80.76 & 38.05 & 88.49 & 69.10 \\
    \rowcolor{highlightblue} $\Delta$ & \textcolor{posgreen}{+4.28} & \textcolor{posgreen}{+1.05} & \textcolor{posgreen}{+3.51} & \textcolor{posgreen}{+3.24} & \textcolor{posgreen}{+3.02} \\

    \midrule
    Qwen3-VL-8B\cite{qwen3} & 67.56 & 78.72 & 40.97 & 83.83 & 67.77 \\
    \rowcolor{highlightblue} w/ \textbf{DCI} & 71.16 & 81.63 & 46.85 & 85.65 & 71.32 \\
    \rowcolor{highlightblue} $\Delta$ & \textcolor{posgreen}{+3.60} & \textcolor{posgreen}{+2.91} & \textcolor{posgreen}{+5.88} & \textcolor{posgreen}{+1.82} & \textcolor{posgreen}{+3.55} \\
    \bottomrule
  \end{tabular}
  }
\end{table}
\textbf{Overall Performance Gains.} As summarized in Table~\ref{tab:benchmark}, the integration of DCI yields consistent and significant performance improvements across all evaluated MLLMs. Most notably, when applied to the Qwen2.5-VL-7B\cite{qwen2} architecture, DCI achieves a substantial average gain of +6.93\%, elevating the mean accuracy from 59.57\% to 66.50\%. These empirical results substantiate that the inherent discriminative capacity of these models is primarily bottlenecked by the flattened candidate space—which induces severe attention dilution—rather than a fundamental deficiency in visual representation.

\textbf{Task Adaptability and Scalability.} Beyond standard classification benchmarks, DCI demonstrates robust adaptability to fine-grained visual recognition tasks. For instance, it yields a +13.31\% accuracy improvement on the CUB-200\cite{cub200} dataset when integrated with Qwen2.5-VL-7B\cite{qwen2}. Furthermore, the efficacy of the proposed strategy remains stable across varying model parameter scales. From the lightweight Qwen3-VL-2B\cite{qwen3} (mean gain of +4.48\%) to the larger Qwen3-VL-8B\cite{qwen3} (mean gain of +3.55\%), the framework consistently enhances the models' visual discrimination capabilities. Notably, Qwen3-VL-8B\cite{qwen3} equipped with DCI achieves a mean accuracy of 71.32\%, successfully outperforming the proprietary state-of-the-art model GPT-4(v4.1) (70.43\%).

\subsection{Accuracy Gains in Large-Scale Visual Classification Task}

\begin{table}[!th]
\centering
\caption{Experimental results on the Imagenet-21K\cite{imagenet} dataset, comparing performance with and without the proposed Divide-and-Conquer Inference (DCI) strategy (where the DCI strategy is configured with $K=100$). $\Delta$ denotes the absolute performance improvement (\%). Bold numbers indicate the best performance in each column.}
\label{tab:imagenet21k_optimized}

\begin{tabular*}{\textwidth}{l @{\extracolsep{\fill}} ccc}
\toprule
\textbf{Model} & \textbf{Baseline (\%)} & \textbf{w/ DCI (\%)} & \textbf{$\Delta$ (\%)} \\
\midrule
GPT-4(v4.1)\cite{gpt4}      & 36.36          & 45.10          & +8.74  \\
Claude Opus 4.5\cite{claude}  & \textbf{45.48} & \textbf{61.64} & +16.16 \\
Qwen3-VL-Plus\cite{qwen3}    & 10.56          & 42.20          & +31.64 \\
Kimi-K2.5\cite{kimi}        & 15.82          & 32.26          & +16.44 \\
LLaMA-4\cite{llama}          & 11.14          & 14.28          & +3.14  \\
\midrule
Qwen2.5-VL-7B\cite{qwen2}    & \phantom{0}0.21 & 37.87          & \textbf{+37.66} \\
Qwen3-VL-2B\cite{qwen3}      & \phantom{0}0.48 & 19.29          & +18.81 \\
Qwen3-VL-4B\cite{qwen3}      & \phantom{0}0.96 & 34.41          & +33.45 \\
Qwen3-VL-8B\cite{qwen3}      & 10.56          & 34.20          & +23.64 \\
\bottomrule
\end{tabular*}
\end{table}

To assess the scalability of DCI in massive-scale visual recognition, benchmarks are conducted on the ImageNet-21K\cite{imagenet} dataset utilizing both open-source regimes (the Qwen family\cite{qwen2,qwen3}) and leading closed-source paradigms (e.g., GPT-4\cite{gpt4}, Claude Opus 4.5\cite{claude}, and Kimi-K2.5\cite{kimi}). 
Considering the economic and computational overhead associated with commercial closed-source APIs---especially since DCI expands the inference-time compute via multi-turn reasoning per instance---we adopt a bifurcated evaluation strategy. 
Specifically, a randomly sampled subset of 5,000 images is dedicated to the closed-source assessment, whereas a comprehensive validation set of 50,000 images is deployed for the open-source models.

The empirical results, detailed in Table~\ref{tab:imagenet21k_optimized}, reveal that DCI provides consistent and substantial accuracy improvements regardless of the baseline architecture. 
Remarkably, DCI elevates the classification performance of the lightweight Qwen2.5-VL-7B\cite{qwen2} by an absolute margin of 37.66\%, underlining its capability to counteract the attention dilution problem without requiring explicit parameter optimization.

\subsection{Speed Gains in Large-Scale Visual Classification Task}

To evaluate the computational efficiency of the proposed Divide-and-Conquer Inference (DCI) strategy, we analyze the inference latency alongside classification accuracy across varying scaling configurations of $K$. Table~\ref{tab:qwen3_imagenet21k_results} details the performance of the Qwen3-VL-2B model on the ImageNet-21K dataset under different configuration scales.

\begin{table}[!ht]
\centering
\caption{Performance evaluation of the Qwen3-VL-2B model on the ImageNet-21K dataset across distinct scaling configurations of $K$. Cell shading intensity represents performance quality: darker blue indicates higher accuracy ($\uparrow$), while darker orange indicates lower latency ($\downarrow$).}
\label{tab:qwen3_imagenet21k_results}
\small
\resizebox{\linewidth}{!}{%
\begin{tabular}{lcccccc}
\toprule
\textbf{Metric} & \textbf{$K=50$} & \textbf{$K=100$} & \textbf{$K=500$} & \textbf{$K=1000$} & \textbf{$K=5000$} & Baseline \\
\midrule
Accuracy (\%) $\uparrow$ &\cca{10} 16.12 &\cca{25} 19.29 &\cca{15} 18.86 &\cca{12} 17.72 &\cca{8} 13.08  &\cca{2} 0.48\\
Latency (s) $\downarrow$ & \ccl{4}13.49 & \ccl{6}10.55 & \ccl{25}5.99  & \ccl{17} 6.00  & \ccl{15} 6.74    &\ccl{10} 7.96\\
\bottomrule
\end{tabular}
}
\end{table}

Contrary to the intuitive assumption that multi-turn reasoning inherently introduces severe computational overhead, our empirical findings reveal that optimal partitioning can simultaneously improve inference efficiency and substantially enhance recognition performance. As discussed in Section~\ref{sec:ceaca}, DCI demonstrates a unique computational advantage on large-scale datasets, effectively breaking the conventional trade-off between accuracy and efficiency. Specifically, configuring the DCI strategy with $K=500$ reduces the inference latency to 5.99 seconds---yielding a notable acceleration over the 7.96-second baseline---while preserving a robust accuracy of 18.86\%, representing a substantial improvement over the 0.48\% baseline.

This counterintuitive speed gain can be largely attributed to the mitigation of the self-attention mechanism's quadratic complexity. By dividing the massive 21K class list into smaller subsets governed by $K$, the context length per forward pass is significantly reduced. While an extremely small scale (e.g., $K=50$) requires excessive independent queries, causing the multi-turn overhead to dominate and spiking latency to 13.49 seconds, an appropriately sized $K$ (such as $K=500$ or $K=1000$) strikes an optimal balance. It reduces the per-prompt sequence length enough to process faster than the monolithic baseline, without incurring excessive structural overhead. 

Furthermore, the results reveal a clear performance-efficiency trade-off. The configuration $K=100$ yields the peak classification accuracy at 19.29\%, albeit with a moderate latency increase to 10.55 seconds. Conversely, overly large partitions ($K=5000$) begin to approach the baseline's attention dilution problem, suffering from both a sharp drop in accuracy (13.08\%) and a latency rebound (6.74 seconds). These findings underscore the flexibility of DCI, allowing practitioners to dynamically adjust $K$ based on their specific latency and precision constraints.

\subsection{Impact of Group Size K}

To thoroughly investigate the influence of the hyperparameter K (the local group size during the divide-and-conquer process), we conduct an ablation study utilizing the Qwen3-VL-8B\cite{qwen3} architecture. Table~\ref{tab:ik} presents the classification accuracy on the CIFAR100\cite{cifar100} and ImageNet-1K\cite{imagenet} datasets across varying group sizes ($K \in \{2, 5, 10, 20, 50\}$), alongside the standard monolithic inference baseline (denoted as "-").

\begin{table}[htp!]
\centering
\caption{Performance and efficiency trade-off of Qwen3-VL-8B across different group sizes $K$. Accuracy (\%) and latency (s) are reported for CIFAR100 and ImageNet-1K. Cell shading intensity represents performance quality: darker blue indicates higher accuracy ($\uparrow$), while darker orange indicates lower latency ($\downarrow$). The DCI framework consistently outperforms the baseline ("-") in accuracy.}\label{tab:ik}
\setlength{\tabcolsep}{5pt}
\renewcommand{\arraystretch}{1.3} 
\resizebox{\linewidth}{!}{%
\begin{tabular}{llcccccc}
\toprule
Dataset & Metric & $K=2$ & $K=5$ & $K=10$ & $K=20$ & $K=50$ & - \\
\midrule
\multirow{2}{*}{CIFAR100} & Acc. (\%) $\uparrow$ & \cca{40} 81.36 & \cca{45} 81.62 & \cca{50} 81.63 & \cca{30} 81.00 & \cca{20} 80.27 & \cca{10} 78.72 \\
 & Lat. (s) $\downarrow$ & \ccl{5} 1.14 & \ccl{15} 0.53 & \ccl{30} 0.37 & \ccl{45} 0.32 & \ccl{60} 0.26 & \ccl{70} 0.20 \\
\midrule
\multirow{2}{*}{ImageNet-1K} & Acc. (\%) $\uparrow$ & \cca{35} 69.29 & \cca{45} 70.92 & \cca{55} 71.16 & \cca{45} 70.99 & \cca{35} 70.14 & \cca{10} 67.56 \\
 & Lat. (s) $\downarrow$ & \ccl{5} 9.06 & \ccl{20} 3.19 & \ccl{40} 1.91 & \ccl{55} 1.63 & \ccl{75} 1.32 & \ccl{85} 0.71 \\
\bottomrule
\end{tabular}
}
\end{table}

Several critical observations can be drawn from these results. First, our DCI framework consistently outperforms the baseline across all evaluated values of $K$, underscoring the inherent robustness of the divide-and-conquer strategy. Second, the performance exhibits a distinct inverted U-shaped trend with respect to the group size. In contrast, inference latency demonstrates a monotonic inverse relationship with $K$, as larger group sizes effectively reduce the reasoning depth and the number of sequential inference stages. As theoretically analyzed in Section~\ref{sec:ceaca}, when the label space remains relatively small (e.g., $N=100$ or $N=1000$), the computational complexity decreases with increasing $K$, which is consistent with our empirical observations. The optimal efficacy is achieved at $K=10$ for both benchmarks, yielding 81.63\% on CIFAR100\cite{cifar100} and 71.16\% on ImageNet-1K\cite{imagenet}.

This non-monotonic behavior highlights a fundamental trade-off governed by the grouping mechanism. Specifically, $K$ balances the reasoning depth against local discriminative power: a smaller $K$ increases depth for fine-grained comparisons but incurs a significant "latency tax," while a larger $K$ prioritizes speed at the risk of context dilution. When K is excessively small (e.g., K = 2), the hierarchical reasoning tree becomes overly deep. This extended multi-stage inference not only risks error propagation across levels but also deprives the model of sufficient comparative context among sibling categories. Conversely, as K increases to larger values (e.g., K = 50), the local candidate space dramatically expands. This expansion gradually reintroduces the severe attention dilution and information collapse that plague the standard flat classification approach. Therefore, K = 10 strikes an optimal balance—maintaining a high local Signal-to-Noise Ratio (SNR) while keeping the reasoning hierarchy sufficiently compact.

\subsection{Comparison with Test-time Scaling Strategies}

To evaluate the superiority and practicality of the DCI framework, we perform a comprehensive comparison against representative test-time scaling and reasoning strategies, including Chain-of-Thought (CoT)\cite{cot}, Self-Consistency (SC)\cite{sc}. Although advanced reasoning paradigms such as ToT~\cite{tot} and GoT~\cite{got} have shown promising performance in open-ended reasoning tasks, their original formulations are not directly applicable to large-scale classification and require substantial task-specific adaptation, hindering fair comparisons. Therefore, our experiments focus on representative classification-oriented baselines. All experiments rely on the Qwen3-VL-8B\cite{qwen3} architecture across CUB-200\cite{cub200}, ImageNet-1K\cite{imagenet} and ImageNet-21K\cite{imagenet}.

\begin{table}[!h]
  \centering
  \caption{Comprehensive scalability analysis of DCI. Acc. (\%) and Lat. (s) denote Top-1 accuracy and inference latency, respectively. \textbf{Improv.} rows highlight the accuracy gain (\textbf{+}) and latency ratio (\textbf{$\times$}) relative to the Baseline. Results are reported as the best values over multiple independent runs. The branching factor is configured according to the task scale, with $K=10$ for CUB-200 and ImageNet-1K, and $K=100$ for ImageNet-21K.}
  \label{tab:tts}
  \setlength{\tabcolsep}{6pt} 
  
  \resizebox{\linewidth}{!}{%
    \begin{tabular}{l cc c cc c cc}
      \toprule
      \multirow{2}{*}{\textbf{Method}} & \multicolumn{2}{c}{\textbf{CUB-200\cite{cub200}}} & & \multicolumn{2}{c}{\textbf{ImageNet-1K\cite{imagenet}}} & & \multicolumn{2}{c}{\textbf{ImageNet-21K\cite{imagenet}}} \\
      \cmidrule{2-3} \cmidrule{5-6} \cmidrule{8-9}
      & Acc. $\uparrow$ & Lat. $\downarrow$ & & Acc. $\uparrow$ & Lat. $\downarrow$ & & Acc. $\uparrow$ & Lat. $\downarrow$ \\
      \midrule
      Baseline & 40.97 & 0.49 & & 67.56 & 1.05 & & 10.56 & 20.67 \\
      CoT\cite{cot} & 41.59 & 0.62 & & 67.75 & 1.53 & & 11.38 & 24.15 \\
      SC\cite{sc} & 39.47 & 2.49 & & 67.29 & 5.02 & & 10.98 & 103.5 \\
      \midrule
      \addlinespace[0.8ex]
      
      \rowcolor[rgb]{0.93, 0.96, 1.0} \multicolumn{9}{l}{\textit{DCI (Sequential Mode)}} \\
      \rowcolor[rgb]{0.93, 0.96, 1.0} DCI (Ours) & 46.85 & 2.15 & & 71.16 & 3.37 & & 48.92 & 44.63 \\
      \rowcolor[rgb]{0.93, 0.96, 1.0} \textbf{Improv. vs. Base} & +5.88 & 4.39$\times$ & & +3.60 & 3.21$\times$ & & +38.36 & 2.16$\times$ \\
      \addlinespace[0.8ex]
      
      \rowcolor[rgb]{1.0, 0.97, 0.9} \multicolumn{9}{l}{\textit{DCI (Parallel Mode)}} \\
      \rowcolor[rgb]{1.0, 0.97, 0.9} + Parallel & 46.85 & 1.42 & & 71.16 & 1.91 & & 48.92 & 27.23 \\
      \rowcolor[rgb]{1.0, 0.97, 0.9} \textbf{Improv. vs. Base} & +5.88 & 2.90$\times$ & & +3.60 & 1.82$\times$ & & +38.36 & 1.32$\times$ \\
      \bottomrule
    \end{tabular}%
  }
\end{table}

\textbf{Task Suitability and Accuracy.}
As shown in Table~\ref{tab:tts}, the proposed DCI framework outperforms representative baselines, and this advantage becomes increasingly pronounced as the task scale $N$ grows. Specifically, DCI achieves an improvement of up to \textbf{+38.36\%} over the baseline, reaching a final accuracy of \textbf{48.92\%}. In contrast, reasoning-based approaches such as CoT\cite{cot} and SC\cite{sc} provide only marginal gains in image classification tasks. These results suggest that existing TTS methods are primarily designed for logical and mathematical reasoning scenarios and exhibit limited effectiveness in large-scale visual recognition. In comparison, DCI demonstrates substantially superior performance and is particularly effective for large-scale image classification tasks.

\textbf{Complexity and Computational Efficiency.}
As discussed in Section~\ref{sec:ceaca}, DCI exhibits increasing computational advantages as the scale of image classification tasks expands. As reported in Table~\ref{tab:tts} on the CUB-200\cite{cub200}, ImageNet-1K\cite{imagenet}, and ImageNet-21K\cite{imagenet} benchmarks, the relative latency overhead of DCI continuously decreases with increasing $N$. Specifically, under the serial execution setting, the relative latency overhead compared with the baseline model decreases from \textbf{4.39$\times$} ($N=200$) to \textbf{2.16$\times$} ($N=21{,}000$). This trend indicates that DCI possesses superior computational scalability and demonstrates increasingly favorable efficiency characteristics in ultra-large-scale classification scenarios.

\textbf{Parallel Scalability.}
Furthermore, under ideal conditions with fully parallel execution and negligible communication overhead, the theoretical computational complexity in Eq.~\ref{eq:complexity2} becomes
$T(N,K)\approx \left(\log_K N\right)\cdot K^2$
, leading to a substantial reduction in inference latency. As shown in Table~\ref{tab:tts}, on ImageNet-21K, the parallel implementation (\textbf{+ Parallel}) reduces the absolute inference time from \textbf{44.63 s} to \textbf{27.23 s}, compressing the relative latency overhead to only \textbf{1.32$\times$} compared with the baseline model. These results demonstrate that DCI offers strong practical potential as an efficient and highly scalable test-time inference strategy for real-world ultra-large-scale recognition applications.

\subsection{Impact of Category Grouping Strategies}

To examine the sensitivity of the DCI framework to different category partitioning methods, we evaluate three distinct grouping strategies on ImageNet-1K: (i) \textbf{Random grouping}, where categories follow a uniform random distribution; (ii) \textbf{Most similar}, where categories are clustered based on semantic embeddings to maximize intra-group coherence; and (iii) \textbf{Least similar}, which aims to maximize intra-group semantic diversity.

\begin{table}[!h]
\centering
\caption{Ablation study of different grouping strategies on ImageNet-1K[15]. This table evaluates the classification accuracy (in \%) using Qwen2.5-VL-7b\cite{qwen2} and Qwen3-VL-4B\cite{qwen3} across three category grouping methodologies. All experiments adopt a fixed group size of $K=10$ for DCI.}
\label{tab:gs}

\begin{tabular*}{\linewidth}{@{\extracolsep{\fill}}lcc}
\toprule
Grouping Strategy & Qwen2.5-VL-7b\cite{qwen2}(\%) & Qwen3-VL-4B\cite{qwen3}(\%)\\
\midrule
Random & 71.57 & 69.09\\
Most similar & 71.32 & 68.81\\
Least similar & 71.39 & 69.11\\
\bottomrule
\end{tabular*}
\end{table}

As reported in Table~\ref{tab:gs}, the performance variance across the three strategies is relatively small. For both Qwen2.5-VL-7b\cite{qwen2} and Qwen3-VL-4B\cite{qwen3}, the accuracy fluctuation remains within 0.3\%. For instance, when using Qwen2.5-VL-7b\cite{qwen2}, the Random strategy reaches 71.57\%, which is slightly higher than the 71.32\% achieved by the Most similar strategy. These data indicate that altering the grouping strategy does not significantly improve performance. For example, when we implement the Least similar strategy to maximize intra-group semantic diversity, the classification difficulty within that specific group decreases. However, the model then encounters extremely similar distractors in the subsequent level of reasoning, thereby offsetting the initial gains. These findings highlight that the core advantage of DCI stems from the reduction of the local candidate space and the mitigation of attention dilution, rather than the specific semantic organization of the groups.

\subsection{Mitigate Performance Collapse in Long Sequence Recognition}

To validate the efficacy of the proposed Divide-and-Conquer Inference (DCI) framework under varying levels of task complexity, we conduct a systematic comparative analysis across six representative vision-language models (MLLMs). By scaling the number of candidate classes ($N$) from 10 to 1,000, we observe several critical performance trends as illustrated in Fig. \ref{fig:performance}:

\begin{figure}[t!]
    \centering
    \includegraphics[width=1.0\linewidth]{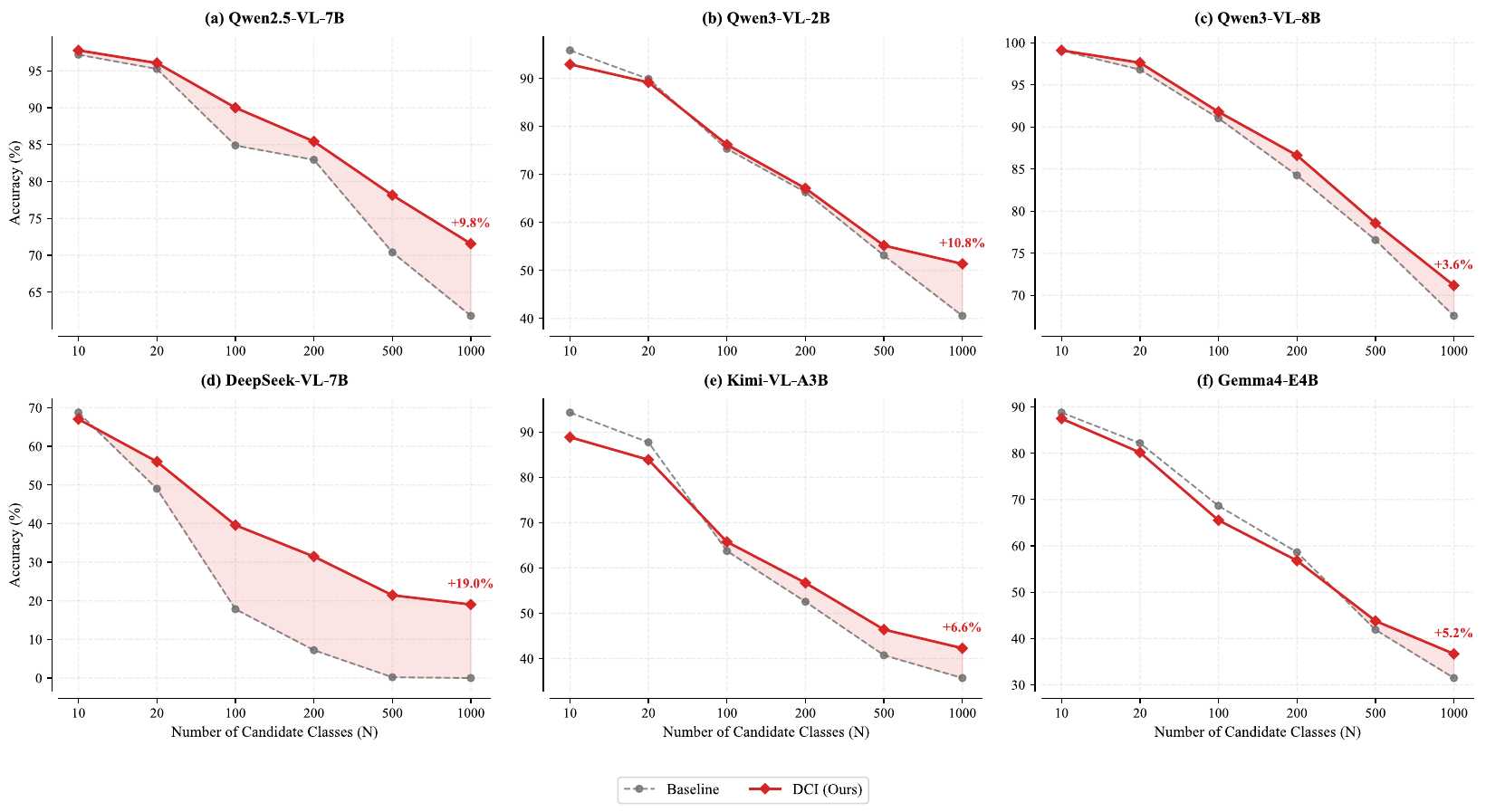}
    \caption{Empirical comparison between the proposed DCI framework and the baseline across diverse MLLM architectures on sampled subsets of the ImageNet-1K dataset with a fixed group size of $K=10$. The curves depict accuracy (\%) relative to the cardinality of candidate classes ($N$). DCI (red) consistently exhibits superior robustness, effectively mitigating performance degradation as the search space expands.}
    \label{fig:performance}
\end{figure}

\begin{itemize}
    \item \textbf{Suppression of Scale-Induced Degradation:} While standard flat inference suffers from a sharp accuracy decline as the label space expands, DCI (red line) significantly inhibits this degradation. By decomposing a single massive decision space into manageable sub-tasks, DCI preserves discriminative precision. Notably, for DeepSeek-VL-7B\cite{deepseek_vl}, where the baseline performance nearly collapses at $N=1,000$, our framework achieves a remarkable \textbf{$+19.0\%$} lead, demonstrating its robustness in high-cardinality scenarios.
    
    \item \textbf{Efficiency Transition in Small-Scale Tasks:} In low-complexity scenarios (e.g., $N \le 20$), the performance of DCI is comparable to the baseline. This observation aligns with our theoretical expectation: for tasks with minimal class competition, the inherent reasoning capacity of MLLMs is sufficient. However, as $N$ increases, the ``divide-and-conquer'' strategy becomes indispensable, progressively widening the performance gap over the standard approach.
    
    \item \textbf{Model-Agnostic Stability and Performance Buffer:} The capacity of DCI to mitigate accuracy decay generalizes across various model families and parameter scales. The shaded regions in Fig. \ref{fig:performance} represent the performance buffer provided by our hierarchical decomposition. Whether implemented on lightweight models like Qwen3-VL-2B\cite{qwen3} (\textbf{$+10.8\%$} gain) or larger architectures like Gemma4-E4B (\textbf{$+5.2\%$} gain), DCI consistently maintains stable accuracy.
\end{itemize}

In summary, the experimental results demonstrate that the DCI framework serves as a pivotal mechanism for inhibiting PC-LSR. As the cardinality of the label space expands, standard inference methods often suffer from critical failures due to cumulative logical noise and reasoning overloads. In contrast, by systematically decomposing the global classification task into a series of localized, manageable sub-decisions, DCI effectively stabilizes the model's discriminative power. This robustness transforms high-difficulty, large-scale recognition challenges into a reliable inference process, preventing the catastrophic accuracy decay typically observed in long-sequence candidate scenarios.

\section{Discussion}

\subsection{The Essence of PC-LSR: Information Entropy vs. Attention Dilution}
The experimental results validate our hypothesis on Performance Collapse in Long Sequence Recognition. Unlike traditional vision models with fixed label spaces, MLLMs process category information within limited context windows. As the number of categories $N$ increases, the required information entropy grows, while attention mechanisms suffer from inherent dilution effects over long candidate sequences. Consequently, relevant category signals become increasingly attenuated. By decomposing the global task into local subproblems, DCI effectively reduces entropy at each stage, maintaining a higher signal-to-noise ratio during inference.

\subsection{DCI as a Test-Time Scaling Paradigm}
Recently, the focus of the AI community has shifted from training-time scaling to test-time scaling (e.g., OpenAI's o1\cite{o1}). DCI aligns with this paradigm by demonstrating that \textbf{inference-time compute can be traded for classification accuracy}. While traditional MLLM inference is a "one-shot" process, DCI introduces a structured, multi-step reasoning path. Our results on ImageNet-21K\cite{imagenet} suggest that the performance bottleneck for lightweight models on large-scale tasks may stem from inference-time structural limitations rather than a fundamental lack of visual knowledge. DCI acts as a bridge that unlocks this latent capability without any additional training.

\subsection{Generality and Model-Agnostic Nature}
A key advantage of DCI is its model-agnostic nature. Whether applied to proprietary models like GPT-4\cite{gpt4} or open-source models like Qwen-VL\cite{qwen3}, DCI generally yields performance gains. This suggests that the attention dilution problem is a fundamental characteristic of the Transformer architecture rather than a flaw of specific model weights. Furthermore, in large-scale image classification scenarios, DCI achieves computational complexity that scales more favorably than quadratic growth, making it not only more accurate but also more robust for extreme-scale classification tasks.

\subsection{Limitations and Trade-offs}
Despite its effectiveness, DCI introduces a trade-off between inference latency and accuracy in small-scale datasets, as recursive reasoning inevitably increases the overall wall-clock time compared to single-pass inference. However, in many high-stakes scenarios, such as medical imaging\cite{medicine}, the gain in precision far outweighs the incremental increase in compute time. Future work will explore parallelization strategies to further optimize the throughput of DCI.

\section{Conclusion}

In this study, we investigate the performance degradation of MLLMs in large-scale recognition tasks. By identifying and formalizing the phenomenon of Performance Collapse in Long Sequence Recognition, we provide a new perspective for understanding the limitations of current Transformer-based multimodal architectures. Our
information-theoretic analysis offers a rigorous explanation for this collapse, revealing that it stems from inherent attention dilution and signal decay when the label space exceeds the model’s effective signal-to-noise ratio threshold.

To address these structural limitations, we propose Divide-and-Conquer Inference, a pure test-time scaling strategy that requires neither retraining nor architectural modifications. By reformulating inference as a recursive hierarchical search process, DCI effectively mitigates attention dilution while substantially reducing the computational burden associated with exhaustive large-scale classification. Experimental results on challenging benchmarks, including ImageNet-21K\cite{imagenet}, demonstrate that DCI not only restores classification accuracy but also accelerates inference in large-scale image classification tasks, enabling lightweight open-source models to achieve competitive performance with significantly larger closed-source systems. 

The significance of DCI lies in its model-agnostic, training-free, and computationally efficient nature, offering a practical paradigm for scaling MLLM capabilities without incurring the prohibitive costs of retraining. This work opens a promising direction toward inference-time compute optimization for multimodal reasoning and large-scale
recognition tasks.

\bibliographystyle{elsarticle-num} 
\bibliography{main}



\end{document}